%% file: main.tex
\documentclass[twoside]{article}

\usepackage[accepted]{aistats2026}

\usepackage{amsfonts}
\usepackage{amsthm}
\usepackage{algorithm}
\usepackage{algorithmic}
\usepackage{booktabs}

\usepackage{multirow}
\usepackage{natbib}
\usepackage{graphicx}     % for \includegraphics
\usepackage{subcaption}   % optional: for subcaptions
\usepackage{amsmath}
\usepackage{amssymb}
\usepackage{stfloats}
\usepackage{hyperref}
\usepackage{relsize}
\usepackage{placeins}
\usepackage{adjustbox}

\graphicspath{{img/}}     % look for images in the img/ directory

\newtheorem{theorem}{Theorem}

\begin{document}

% If your paper is accepted and the title of your paper is very long,
% the style will print as headings an error message. Use the following
% command to supply a shorter title of your paper so that it can be
% used as headings.
%
%\runningtitle{I use this title instead because the last one was very long}

% If your paper is accepted and the number of authors is large, the
% style will print as headings an error message. Use the following
% command to supply a shorter version of the author names so that
% they can be used as headings (for example, use only the surnames)
%
%\runningauthor{Surname 1, Surname 2, Surname 3, ...., Surname n}

\twocolumn[

\aistatstitle{TESLA: Taylor Expansion of Sinusoidal Learnable Activations}

\aistatsauthor{
Daehwa Ko \And
Jaehyeon Kim \And
Seunghyun Ham \And
Jay Hoon Jung
}

\aistatsaddress{
Korea Aerospace University, Goyang, Republic of Korea
}
]

\begin{abstract}
The parity problem—deciding whether the number of ones in a binary vector is odd or even—remains challenging for standard neural networks due to linear inseparability and the need for global interactions. We propose TESLA, an activation defined as a learnable combination of sine and cosine terms, enabling explicit control over polynomial degree and selective amplification of high-order components. Theoretically, we show that constraining TESLA’s coefficients yields Lipschitz/Rademacher complexity bounds and shapes the training dynamics to emphasize higher-frequency structure. Empirically, on parity with input length $n=32$, TESLA attains strong generalization with 100K training samples ($\approx 0.002\%$ of the $2^{32}$ input space) and remains robust under heavy corruption, retaining high accuracy with up to 30\% label noise. We also compare against periodic and frequency-based baselines (SIREN, SNAKE, and Fourier feature embeddings) on parity and Forrelation. Beyond synthetic structure, TESLA delivers comparable performance on ImageNet-100, indicating that activation-level degree control transfers to more general vision workloads. Code: \url{https://github.com/KAU-QuantumAILab/TESLA}
\end{abstract}

\begin{figure}[!t]
  \centering
  \includegraphics[width=7cm]{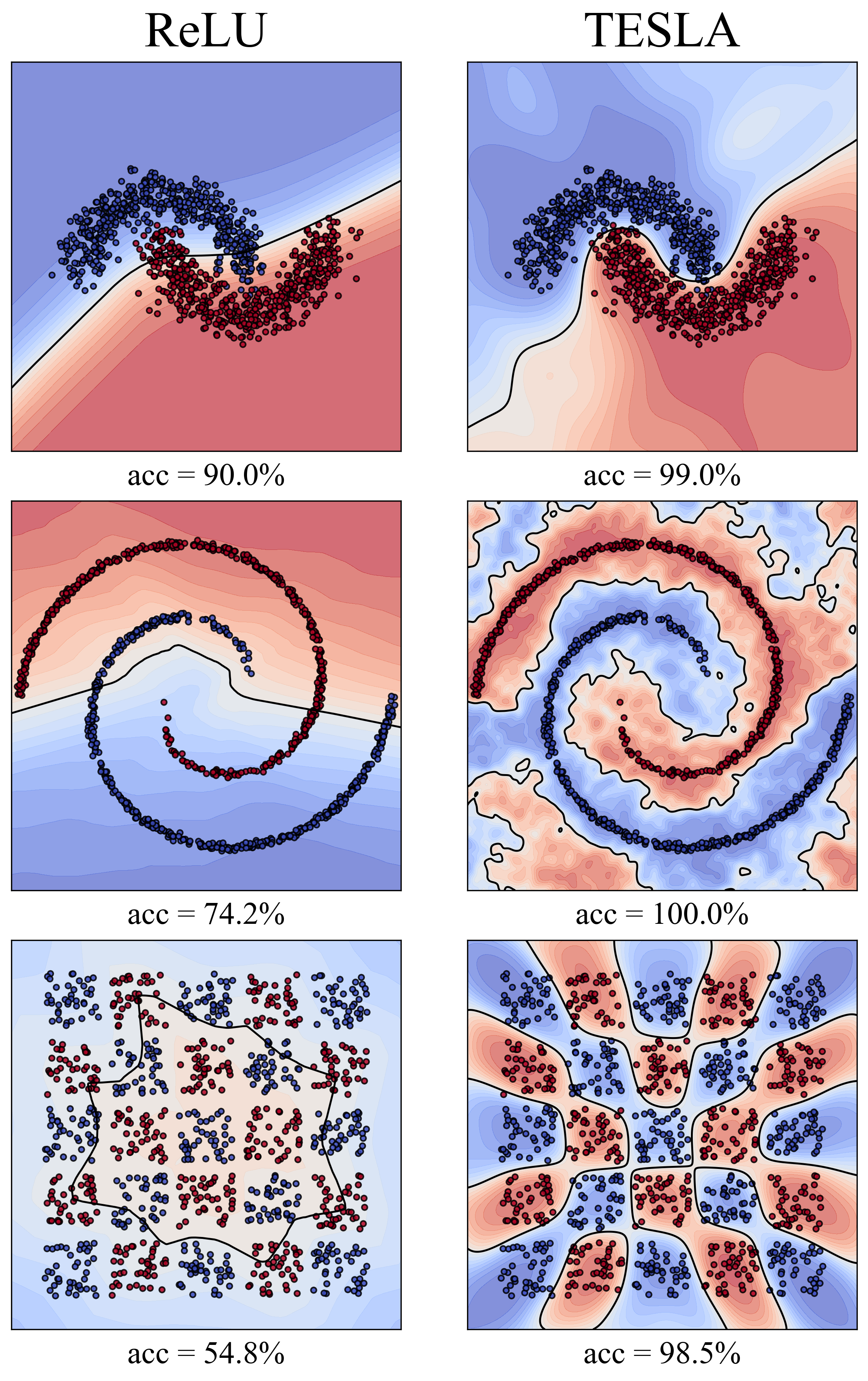} 
  \caption{Visualization of the decision boundaries for TESLA($K=8$) and baseline activations on a 2D feature space. The solid lines separate the predicted classes.}
  \label{fig:decision_boundary}
\end{figure}

\section{INTRODUCTION}
Modern neural networks largely build complex functions by stacking local nonlinearities such as ReLU \citep{ReLU} and GeLU \citep{GeLU} with linear maps. These designs excel at capturing local structure in images and language, but they can be inefficient for global or long-range interaction tasks that depend on high-order combinations of many input coordinates, global parity/phase, or other global symmetries \citep{parity_problem}. Empirically and theoretically, traditional activation functions and vanilla MLPs exhibit a spectral bias toward low-frequency (low-degree) components, so representing strong high-order or global structure often requires substantially greater depth, width, or very large weight norms \citep{rahaman2019spectral,xu2019frequency,tancik2020fourier,sitzmann2020siren}.

To enable direct and learnable control over the polynomial order at the activation level, we propose a simple parametric activation function based on a finite sine and cosine basis. This formulation allows the network to learn Fourier-like coefficients that explicitly control the effective polynomial order of activations. We refer to this activation as Taylor Expansion of Sinusoidal Learnable Activation (TESLA). Our main contributions are as follows:
\begin{itemize}
  \item We derive tight Lipschitz bounds, establish Rademacher complexity--based generalization guarantees, and characterize learning dynamics, showing that TESLA’s inductive bias naturally favors recovery of low- and mid-frequency structure.
  
  \item We present constructive approximation results for ridge- and interaction-type functions, together with complementary lower bounds showing that standard piecewise-linear activations require substantially more resources to match TESLA’s ability to represent oscillatory structure.
  \item On synthetic benchmarks, TESLA generalizes a 32-bit parity task with \textbf{100K training samples} and remains robust under label noise up to 30\%. It also improves performance on Forrelation and LPN tasks and remains usable in ImageNet-100-scale models while incurring negligible throughput overhead ($\approx 1\%$).
\end{itemize}

\section{RELATED WORK}
\label{sec:related}
Recent research has explored various ways to enrich neural network representations through activation design. Several works, such as PReLU \citep{P-relu} and APL \citep{APL}, introduce parametric nonlinear activations with a small number of learnable parameters to improve expressiveness and facilitate optimization. While these methods directly modify the pointwise nonlinearity, they do not provide explicit mechanisms for controlling polynomial or spectral degree. In contrast, approaches that learn per-edge or per-unit activations—such as KANs \citep{KAN2024} and spline-based methods \citep{Apicella2020, Scardapane2016}—offer substantially greater expressivity and can closely approximate complex mathematical or scientific functions. However, prior studies and surveys have noted that this increased flexibility can lead to overfitting or instability in low-data regimes unless it is carefully regulated through techniques such as regularization or parameter sharing.

Building on these insights, a complementary line of work employs sinusoidal and Fourier-based parameterizations to address the spectral limitations of standard activations and embeddings. These approaches aim to expand the spectral capacity of neural networks and reshape their training dynamics, motivating the development of periodic activations and Fourier feature mappings. Periodic activations (e.g., SIREN \citep{sitzmann2020siren}) and positional or Fourier feature encodings \citep{tancik2020fourier, rahimi2007random} mitigate spectral bias by introducing high-frequency components into the network—either by expanding its effective spectral support or by modifying the spectrum of the initial Neural Tangent Kernel (NTK, \citep{NTK}). The NTK is a theoretical construct that characterizes the training dynamics of infinitely wide neural networks, showing that such networks evolve like linear models governed by a fixed kernel, which in turn guarantees convergence to a global minimum.

In parallel, a substantial body of theoretical work has emerged to analyze spectral bias (or the frequency principle) and NTK dynamics, aiming to explain why standard networks prioritize low-frequency components and how modifying kernels or embeddings can alter this behavior \citep{rahaman2019spectral, xu2019frequency, NTK}. Classical approximation-theoretic results—such as Barron-type bounds \citep{barron1993universal, barron1994}, depth–width separation theorems \citep{eldan2016, telgarsky2016benefits}, and lower bounds for piecewise-linear approximations \citep{yarotsky2017, telgarsky2016benefits}—provide a rigorous framework for comparing different parameterizations. These results motivate our constructive ridge-approximation upper bounds and lower-bound sketches, which characterize when piecewise-linear activations incur high resource costs to emulate oscillatory global features \citep{barron1993universal, telgarsky2016benefits}.

\section{PROPOSED METHOD}
\label{sec:method}
\subsection{Activation Function Definition}
We define a parametric activation function, shared across all neurons within a layer, built from a finite sine and cosine basis:
\begin{equation}
\label{eq:Tesla}
\phi_K(z) \;=\; \sum_{k=1}^K\!\left(\frac{a_k}{k}\sin(kz)+\frac{b_k}{k}\cos(kz)\right),
\end{equation}

where \(K\in\mathbb{N}\) controls the maximum frequency, and \(\{a_k,b_k\}_{k=1}^K\) are learnable scalar coefficients. 
For a neuron receiving input \(z=v^\top x\), the neuron output is \(f(x)=\phi_K(v^\top x)\). We also define the coefficient budget:
\begin{equation}
\label{eq:A_K}
A_K \;:=\; \sum_{k=1}^K \big(|a_k| + |b_k|\big),
\end{equation}

which appears in stability and complexity bounds.

\subsection{Taylor (Maclaurin) Expansion and Polynomial Coefficients}
Expanding \(\phi_K\) via the Taylor series of sine and cosine gives:
\begin{align}
\label{eq:taylor_expansion}
\phi_K(z) =& \sum_{k=1}^{K} \frac{b_k}{k} \nonumber + \sum_{m=0}^\infty \left( \frac{(-1)^m}{(2m+1)!} \sum_{k=1}^K a_k k^{2m} \right) z^{2m+1} \nonumber \\
& + \sum_{m=1}^\infty \left( \frac{(-1)^m}{(2m)!} \sum_{k=1}^K b_k k^{2m-1} \right) z^{2m}.
\end{align}

Eq.~\eqref{eq:taylor_expansion} shows that each polynomial order is an explicit linear functional of the learned coefficients \(\{a_k,b_k\}\), giving direct control over degree components at the activation level.

\paragraph{Interpretation vs.\ Implementation.}
TESLA is implemented as the finite trigonometric expansion in Eq.~\eqref{eq:Tesla}; we do not evaluate a polynomial series during the forward pass. We use Eq.~\eqref{eq:taylor_expansion} as an interpretation that links learned trigonometric coefficients to effective polynomial degree and motivates the coefficient budget in our analysis.

\paragraph{Input Spectralization vs.\ Activation Spectralization.}
Input-side spectralization (e.g., Fourier features; \citealp{tancik2020fourier}) maps inputs to high-frequency coordinates and leaves mode selection to later layers. TESLA instead spectralizes the activation itself: coefficients $\{a_k,b_k\}$ directly shape Maclaurin-order terms, enabling selective amplification or attenuation of target orders. 

In practice, this design reduces the need for hand-crafted frequency mappings at the input stage and moves spectral control into a small set of learnable activation coefficients. As a result, practitioners can tune inductive bias with fewer architectural changes while preserving compatibility with standard training pipelines. This makes TESLA particularly attractive when transferring across tasks with different frequency characteristics.

\section{THEORETICAL ANALYSIS}
\label{sec:theory}
In this section, we provide a concise theoretical foundation for TESLA. We first bound derivatives and Lipschitz constants to show how the per-harmonic $\ell_1$ budget $A_K$ controls gradient magnitudes. We then present Rademacher-complexity generalization bounds, analyze NTK-style mode-wise learning dynamics, and derive a practical rule for selecting $K$ as a function of the effective order $m_{\mathrm{eff}}$, sample size $N$, and budget $A_K$.

\subsection{Derivative Bound and Stability}
\label{sec:deriv-bound}
Differentiation of Eq.~\eqref{eq:Tesla} gives:
\[
\phi_K'(z)\;=\;\sum_{k=1}^K \big(a_k\cos(kz)-b_k\sin(kz)\big).
\]
Since $|\sin(\cdot)|\le 1$ and $|\cos(\cdot)|\le 1$, each term is bounded by its amplitude,
$|a_k\cos(kz)-b_k\sin(kz)|\le \sqrt{a_k^2+b_k^2}$; therefore,
\begin{equation*}
\label{eq:phiK-prime-bound}
\|\phi_K'\|_\infty
\;\le\;
\sum_{k=1}^K \sqrt{a_k^2+b_k^2}
\;\le\;
\sum_{k=1}^K (|a_k|+|b_k|)
\;=\;A_K.
\end{equation*}

\paragraph{Composition Lipschitz Bounds.}
Let $h(x)=\phi_K(Wx)$, where $\phi_K$ is applied elementwise. By the chain rule,
\[
\operatorname{Lip}(h)\le \|\phi_K'\|_\infty\,\|W\|_{\mathrm{op}} \le A_K\,\|W\|_{\mathrm{op}}.
\]
Here, $\|\cdot\|_{\mathrm{op}}$ denotes the spectral norm, and $\|\cdot\|_\infty$ denotes the $L_\infty$ norm.

% Let a layer $h(x)=\phi_K(Wx)$, where $\phi_K$ is applied elementwise. Then,
% \[
% \operatorname{Lip}(h)\le \|\phi_K'\|_\infty\,\|W\|_{\mathrm{op}} \le A_K\,\|W\|_{\mathrm{op}}.
% \]

% For a scalar neuron $f(x)=\phi_K(v^\top x)$ with $\quad \|v\|_2 \le C.$, the chain rule yields

% \[
% \begin{aligned}
% \nabla f(x) &= \phi_K'(v^\top x)\,v,\\
% \operatorname{Lip}(f)
% &= \sup_x \|\nabla f(x)\|_2\le \|\phi_K'\|_\infty \cdot \|v\|_2 \\
% &\le A_K \cdot C.
% \end{aligned}
% \]

% For an elementwise layer $h(x)=\phi_K(Wx)$ with operator (spectral) norm $\|W\|_{\mathrm{op}}$,
% \[
% \operatorname{Lip}(h)\le \|\phi_K'\|_\infty\,\|W\|_{\mathrm{op}}\le A_K\,\|W\|_{\mathrm{op}}.
% \]

\paragraph{Network-level Lipschitz and Degree Control.}
For an $L$-layer network
\[
f(x)=W_L \phi_K(\cdots \phi_K(W_1 x)), 
\qquad \|\phi_K'\|_\infty \le A_K,
\]
the Lipschitz constant satisfies
\[
\operatorname{Lip}(f)
\le
\left(\prod_{\ell=1}^L \|W_\ell\|_{\mathrm{op}}\right) A_K^{L-1}.
\]

Beyond Lipschitz control, the same composition also governs the growth of polynomial complexity: under the Maclaurin-coefficient view induced by Eq.~\eqref{eq:taylor_expansion}, composing $L-1$ TESLA layers activates terms up to order $O(K^{L-1})$.
Moreover, the $\ell_1$ norm of the corresponding coefficient vector is bounded as
\[
\|\mathrm{coef}(f)\|_1
\le
A_K^{L-1}
\prod_{\ell=1}^L \|W_\ell\|_{1\to 1}.
\]

Thus, a single per-layer budget $A_K$ stabilizes gradients, while $K$ controls the accessible frequency range. In particular, for an $L_{\mathrm{loss}}$-Lipschitz loss $\mathcal L$,
\[
\|\nabla_{W_\ell} \mathcal L(f(x),y)\|_F
\le
L_{\mathrm{loss}} \|x\|_2
\left(\prod_{j=1}^L \|W_j\|_{\mathrm{op}}\right)
A_K^{L-1}.
\]

This highlights the trade-off: increasing $K$ expands representable high frequencies, while controlling $A_K$ keeps optimization stable.

% \paragraph{Network-level Lipschitz and Degree Control.}
% For an $L$-layer network
% \[
% f(x)=W_L\phi_K(\cdots \phi_K(W_1x)),
% \qquad
% \|\phi'_K\|_\infty\le A_K,
% \]
% the Lipschitz constant satisfies
% \[
% \operatorname{Lip}(f)\le\big(\prod_{\ell=1}^{L}\|W_\ell\|_{\mathrm{op}}\big)\,A_K^{\,L-1}.
% \]
% Beyond Lipschitz control, the same composition also governs the growth of polynomial complexity in the Maclaurin-coefficient view.
% In the Maclaurin-coefficient view induced by Eq.~\eqref{eq:taylor_expansion},
% composing $L-1$ TESLA layers can activate terms up to order $O(K^{L-1})$.
% Beyond Lipschitz control, the same composition also governs the growth of polynomial complexity. In particular, under the Maclaurin-coefficient view induced by Eq.~\eqref{eq:taylor_expansion}, composing $L-1$ TESLA layers activates terms up to order $O(K^{L-1})$.
% Under the same $\ell_1$ budget, a corresponding truncated coefficient mass is bounded by
% \[\|\mathrm{coef}(f)\|_1\le A_K^{\,L-1}\,\prod_{\ell=1}^{L}\|W_\ell\|_{1\to1}.\]
% Thus a single per-layer budget $A_K$ stabilizes gradients while $K$ controls the accessible frequency range.
% In particular, for an $L_{\mathrm{loss}}$-Lipschitz loss $\mathcal{L}$,
% \[\|\nabla_{W_\ell}\mathcal{L}(f(x),y)\|_F \le L_{\mathrm{loss}}\,\|x\|_2\,\big(\prod_{j=1}^{L}\|W_j\|_{\mathrm{op}}\big)\,A_K^{\,L-1}.\]
% This highlights the trade-off: increasing $K$ expands representable high frequencies, while controlling $A_K$ keeps optimization stable.

\subsection{Rademacher Complexity and Generalization}

\begin{theorem}\label{thm:rademacher_main}
Assume $\|x_i\|_2\le R$ for all $i=1,\dots,N$. Define
\[
\mathcal{F}_K=\{\,x\mapsto \phi_K(v^\top x)\;|\;\|v\|_2\le W,\; A_K\le A\,\},
\]
and
\[
B_0(A):=\sup_{\phi_K:\,A_K\le A}|\phi_K(0)|.
\]
Then, for any sample of size $N$, the empirical Rademacher complexity satisfies
\[
\hat{\mathfrak{R}}_N(\mathcal{F}_K) \le \frac{AWR}{\sqrt{N}} + \frac{B_0(A)}{\sqrt{N}} = \frac{AWR + B_0(A)}{\sqrt{N}}.
\]
Consequently, for any $L_{\mathrm{loss}}$-Lipschitz loss $\mathcal{L}$, the generalization gap scales as
$O\!\left(L_{\mathrm{loss}}(AWR+B_0(A))/\sqrt{N}\right)$.
\end{theorem}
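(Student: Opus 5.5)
The plan is to split off the constant term and treat the remaining, origin-vanishing part by Lipschitz contraction, using the derivative bound of Section~\ref{sec:deriv-bound}. For every admissible $\phi_K$ write
\[
\phi_K(v^\top x)=\phi_K(0)+\psi(v^\top x),\qquad \psi(t):=\phi_K(t)-\phi_K(0).
\]
By $\|\phi_K'\|_\infty\le A_K\le A$, each such $\psi$ is $A$-Lipschitz with $\psi(0)=0$. Subadditivity of the supremum then gives
\[
\hat{\mathfrak R}_N(\mathcal F_K)\le \mathbb E_\sigma\sup_{\phi_K}\frac1N\sum_{i=1}^N\sigma_i\phi_K(0)+\mathbb E_\sigma\sup_{\phi_K,v}\frac1N\sum_{i=1}^N\sigma_i\psi(v^\top x_i).
\]
I would bound the two terms separately.

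\emph{Constant term.} Since $|\phi_K(0)|\le B_0(A)$, the first supremum is at most $\frac{B_0(A)}{N}\bigl|\sum_i\sigma_i\bigr|$. By Jensen, $\mathbb E|\sum_i\sigma_i|\le\sqrt N$, so this term contributes at most $B_0(A)/\sqrt N$.

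\emph{Linear part.} The linear class $\{x\mapsto v^\top x:\|v\|_2\le W\}$ satisfies
\[
\mathbb E_\sigma\sup_{\|v\|_2\le W}\frac1N\sum_i\sigma_i v^\top x_i=\frac WN\,\mathbb E\Bigl\|\sum_i\sigma_ix_i\Bigr\|_2\le\frac WN\sqrt{\sum_i\|x_i\|_2^2}\le\frac{WR}{\sqrt N},
\]
by Cauchy--Schwarz and Jensen. For a \emph{fixed} coefficient vector $\{a_k,b_k\}$, the Ledoux--Talagrand contraction lemma applies to the $A$-Lipschitz map $\psi$ with $\psi(0)=0$. This bounds the second term by $A$ times the linear complexity, i.e.\ by $AWR/\sqrt N$. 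Adding the two pieces gives $(AWR+B_0(A))/\sqrt N$.

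\emph{Main obstacle.} The supremum is taken jointly over $v$ and the coefficients, so the contraction map is not fixed. The peeling step of Ledoux--Talagrand needs $\psi(t_1)-\psi'(t_1')\le A|t_1-t_1'|$ for two \emph{different} members $\psi,\psi'$, and this can fail. My first attempt would be to rerun the peeling proof coordinate by coordinate. At step $i$ I would pair the two maximizing pairs $(\psi,v)$ and $(\psi',v')$ and try to swap the roles of $\psi$ and $\psi'$ in the $i$-th coordinate. The idea is to exploit that the admissible set of $\psi$'s is a symmetric convex set, namely the $\ell_1$-ball image under the linear map $(a,b)\mapsto\sum_k(a_k\sin(kt)+b_k(\cos(kt)-1))/k$. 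This might let me choose a common $\psi$ evaluated at both $v^\top x_i$ and $v'^\top x_i$.

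If that fails, the fallback is to use linearity in the coefficients. The $\ell_1$ constraint turns the supremum over coefficients into $A\max_k$ over the $2K$ fixed $1$-Lipschitz atoms $\sin(kt)/k$ and $(\cos(kt)-1)/k$. For each atom, contraction holds exactly. The remaining difficulty is to exchange $\mathbb E\max_k$ with $\max_k\mathbb E$ without picking up a $\sqrt{\log K}$ factor. This exchange is the step I expect to require the most care, and where the stated constant $1$ in front of $AWR$ is most delicate.

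Finally, the generalization statement follows from the standard symmetrization bound: the gap is at most $2\hat{\mathfrak R}_N(\mathcal L\circ\mathcal F_K)$ plus a concentration term. Applying contraction once more with the $L_{\mathrm{loss}}$-Lipschitz loss gives $\hat{\mathfrak R}_N(\mathcal L\circ\mathcal F_K)\le L_{\mathrm{loss}}\hat{\mathfrak R}_N(\mathcal F_K)$, and hence the $O\bigl(L_{\mathrm{loss}}(AWR+B_0(A))/\sqrt N\bigr)$ rate.
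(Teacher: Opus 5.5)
Your plan is the paper's proof: the same split $\phi_K=\phi_K(0)+\tilde\phi_K$, the same Jensen bound $B_0(A)/\sqrt N$ on the constants, the same $WR/\sqrt N$ bound for $\mathcal F_{\mathrm{lin}}$, and then Ledoux--Talagrand. At that last step the paper simply asserts $\hat{\mathfrak R}_N(\{\tilde\phi_K(v^\top x)\})\le A\,\hat{\mathfrak R}_N(\mathcal F_{\mathrm{lin}})$ for the class in which \emph{both} $v$ and the coefficients vary. So the paper does not supply the idea you are missing, and your objection applies to its proof as well.

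In the peeling step one only gets $\psi(s)-\psi'(s')=[\psi(s)-\psi(s')]+[\psi(s')-\psi'(s')]$, and the second bracket is not controlled by $|s-s'|$. Your first repair cannot fix this. $\psi$ appears in every coordinate of $\sum_i\sigma_i\psi(v^\top x_i)$, so it cannot be swapped in coordinate $i$ alone. Convexity of the coefficient set does not help either, because the objective is not jointly concave in $(\psi,v)$. As written, the proposal therefore has a gap at exactly the step you flagged.

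Your second repair does close the gap, provided you bound $\mathbb E\max$ with a deviation bound instead of trying to commute it with $\max$. By linearity in the coefficients,
\[
N\hat{\mathfrak R}_N(\mathcal F_K)=A\,\mathbb E\max_g Z_g,\qquad Z_g:=\sup_{\|v\|_2\le W}\Bigl|\sum_{i=1}^N\sigma_i\,g(v^\top x_i)\Bigr|,
\]
where $g$ ranges over the $2K$ atoms $\{\sin(kt)/k,\cos(kt)/k\}_{k\le K}$. The steps are:
\begin{itemize}
\item Contraction with each fixed atom gives $\mathbb E Z_g\le WR\sqrt N$ for sine atoms. For cosine atoms it gives $\mathbb E Z_g\le(1+2WR)\sqrt N$; the factor $2$ appears because $\{(\cos(kv^\top x)-1)/k\}$ is not closed under negation.
\item Since $|g|\le 1/k$, McDiarmid's inequality gives $\mathbb P(Z_g-\mathbb E Z_g>u)\le e^{-k^2u^2/(2N)}$.
\item Because $\int_0^\infty\min\{1,2\sum_{k\ge1}e^{-k^2w^2/2}\}\,dw<\infty$, you get $\mathbb E\max_g(Z_g-\mathbb E Z_g)\le C\sqrt N$ with $C$ absolute. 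The $1/k$ decay of the atoms removes the $\sqrt{\log K}$ you feared.
\end{itemize}
Adding these gives $\hat{\mathfrak R}_N(\mathcal F_K)\le A(2WR+1+C)/\sqrt N$. Since $B_0(A)=A$, this is $O((AWR+B_0(A))/\sqrt N)$ uniformly in $K$, so the ``consequently'' part of the theorem follows. The displayed inequality with leading constant $1$, however, is established neither by this argument nor by the paper's.
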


The bound recovers the familiar linear-class rate $\Theta(WR/\sqrt{N})$ up to the multiplicative factor $A$ due to the activation Lipschitz constant.
In deep architectures, the same effect accumulates multiplicatively via layer-wise operator norms and activation budgets.

\begin{proof}
Let
\[
\mathcal{F}_{\mathrm{lin}}:=\{x\mapsto v^\top x:\|v\|_2\le W\}.
\]
By a standard argument,
\[
\hat{\mathfrak{R}}_N(\mathcal{F}_{\mathrm{lin}})
= \frac{1}{N}\,\mathbb{E}\Big[\sup_{\|v\|_2\le W}\sum_{i=1}^N \sigma_i v^\top x_i\Big]
\le \frac{W R}{\sqrt{N}}.
\]
For any $\phi_K$ with $A_K\le A$, write $\phi_K(t)=\tilde\phi_K(t)+\phi_K(0)$ with $\tilde\phi_K(0)=0$ and $\operatorname{Lip}(\tilde\phi_K)\le A$.
Also, since $\phi_K(0)=\sum_{k=1}^K \frac{b_k}{k}$ and $A_K\le A$, we have $B_0(A)\le A$.
By the Ledoux--Talagrand contraction lemma~\citep{ledoux_talagrand},
\[
\hat{\mathfrak{R}}_N\big(\{\tilde\phi_K(v^\top x)\}\big)
\le A\,\hat{\mathfrak{R}}_N(\mathcal{F}_{\mathrm{lin}}) \le \frac{AWR}{\sqrt{N}}.
\]
For the constants,
\begin{align*}
\sup_{|c|\le B_0(A)}\frac{1}{N}\,\mathbb{E}\Big|\sum_{i=1}^N \sigma_i c\Big|
&= \frac{B_0(A)}{N}\,\mathbb{E}\Big|\sum_{i=1}^N \sigma_i\Big| \\
&\le \frac{B_0(A)}{N}\,\sqrt{\mathbb{E}\Big(\sum_{i=1}^N \sigma_i\Big)^2} \\
&= \frac{B_0(A)}{\sqrt{N}}.
\end{align*}
Combine the two parts to get the stated bound.
\end{proof}

\begin{figure}[t]
  \centering
  \includegraphics[width=0.95\linewidth]{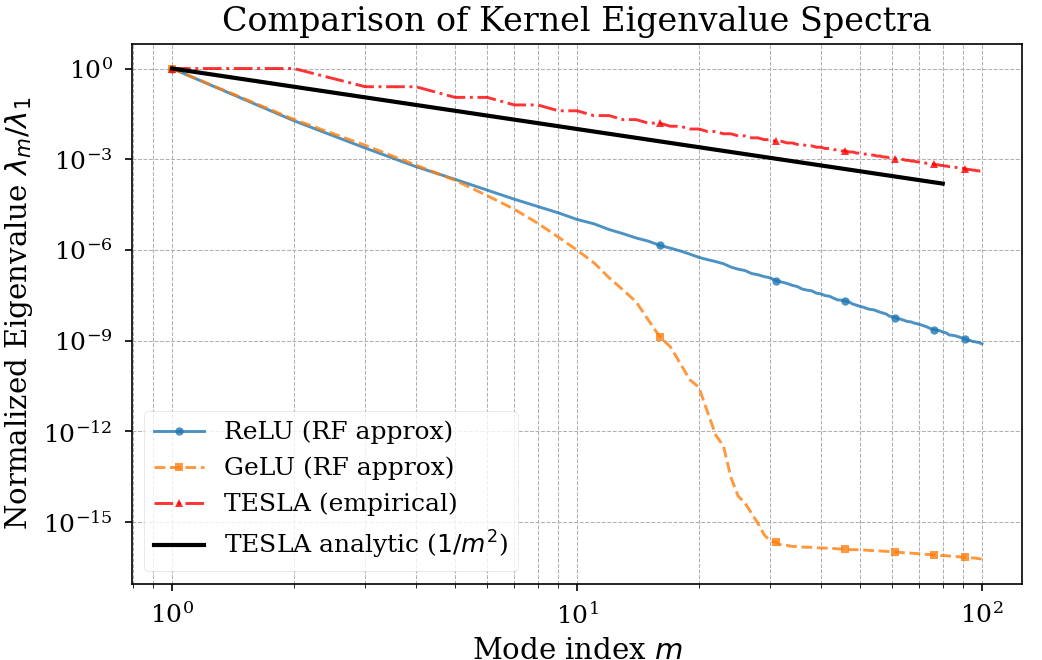}
  \caption{Mode-wise empirical spectra: normalized eigenvalues $\tilde\lambda_m=\lambda_m/\lambda_1$ vs.\ mode index $m$. Lines: TESLA analytic ($\tfrac{1}{2m^2}$), TESLA empirical, RF-ReLU, RF-GeLU. 
  RF denotes finite random-feature approximations of the infinite-width ReLU/GeLU kernels.}
  \label{fig:mode-spectra}
\end{figure}

\subsection{Mode-Wise Learning Dynamics}
\label{sec:modewise}

We analyze learning dynamics on the circle $\mathbb{T}=[0,2\pi)$ in the Fourier basis, where a \emph{mode} refers to a trigonometric component (e.g., $\sin(mt)$ or $\cos(mt)$) with frequency index $m$. We equip $\mathbb{T}$ with the inner product
\[
\langle f,g\rangle=\frac{1}{2\pi}\int_0^{2\pi} f(t)g(t)\,dt.
\]
Under squared loss, the linearized gradient flow around initialization is
\[
\partial_{\tau}(f_\tau-f^\star)=-\eta K_\phi(f_\tau-f^\star),
\]
where $\eta>0$ is the learning-rate scale and $K_\phi$ is the kernel operator
\begin{align*}
K_\phi(t,t') &= \nabla_\theta f_\theta(t)^\top \nabla_\theta f_\theta(t'),\\
(K_\phi g)(t)&=\int_0^{2\pi}K_\phi(t,t')g(t')\,\frac{dt'}{2\pi}.
\end{align*}

With TESLA coefficients $\{a_k,b_k\}_{k=1}^K$, the Jacobian features satisfy
\[
\frac{\partial f_\theta}{\partial a_k}(t)=\frac{1}{k}\sin(kt),\qquad
\frac{\partial f_\theta}{\partial b_k}(t)=\frac{1}{k}\cos(kt),
\]
so $K_\phi$ is translation-invariant with kernel
\[
K_\phi(t,t')=\sum_{k=1}^K \frac{1}{k^2}\cos\!\big(k(t-t')\big).
\]
The orthonormal Fourier basis $\psi_{m,s}(t)=\sqrt{2}\sin(mt)$ and $\psi_{m,c}(t)=\sqrt{2}\cos(mt)$ diagonalizes $K_\phi$:

\[
\begin{gathered}
K_\phi[\psi_{m,a}]=\lambda_m^{(\phi)}\psi_{m,a},\\
\lambda_m^{(\phi)}=\tfrac{1}{2m^2},\quad a\in\{s,c\},\ 1\le m\le K.
\end{gathered}
\]

and $\lambda_m^{(\phi)}=0$ for $m>K$. Writing the modal coefficients
\[
e_{m,a}(\tau)=\langle f_\tau-f^\star,\psi_{m,a}\rangle,
\]
the dynamics decouple as
\[
\begin{gathered}
e_{m,a}(\tau)=e_{m,a}(0)\exp\{-\eta\lambda_m^{(\phi)}\tau\},\\
\|f_\tau-f^\star\|_{L^2}^2=\sum_{m,a} e_{m,a}(\tau)^2.
\end{gathered}
\]
Thus, larger $\lambda_m^{(\phi)}$ yields faster decay of the corresponding mode. Analytically, TESLA gives $\lambda_m^{(\phi)}\propto m^{-2}$; empirically (Fig.~\ref{fig:mode-spectra}), its spectrum is flatter than RF proxies, allocating relatively larger eigenvalues to medium--high modes and accelerating recovery of higher-order structure.

\begin{figure*}[!t]
 \centering
 \includegraphics[width=14cm]{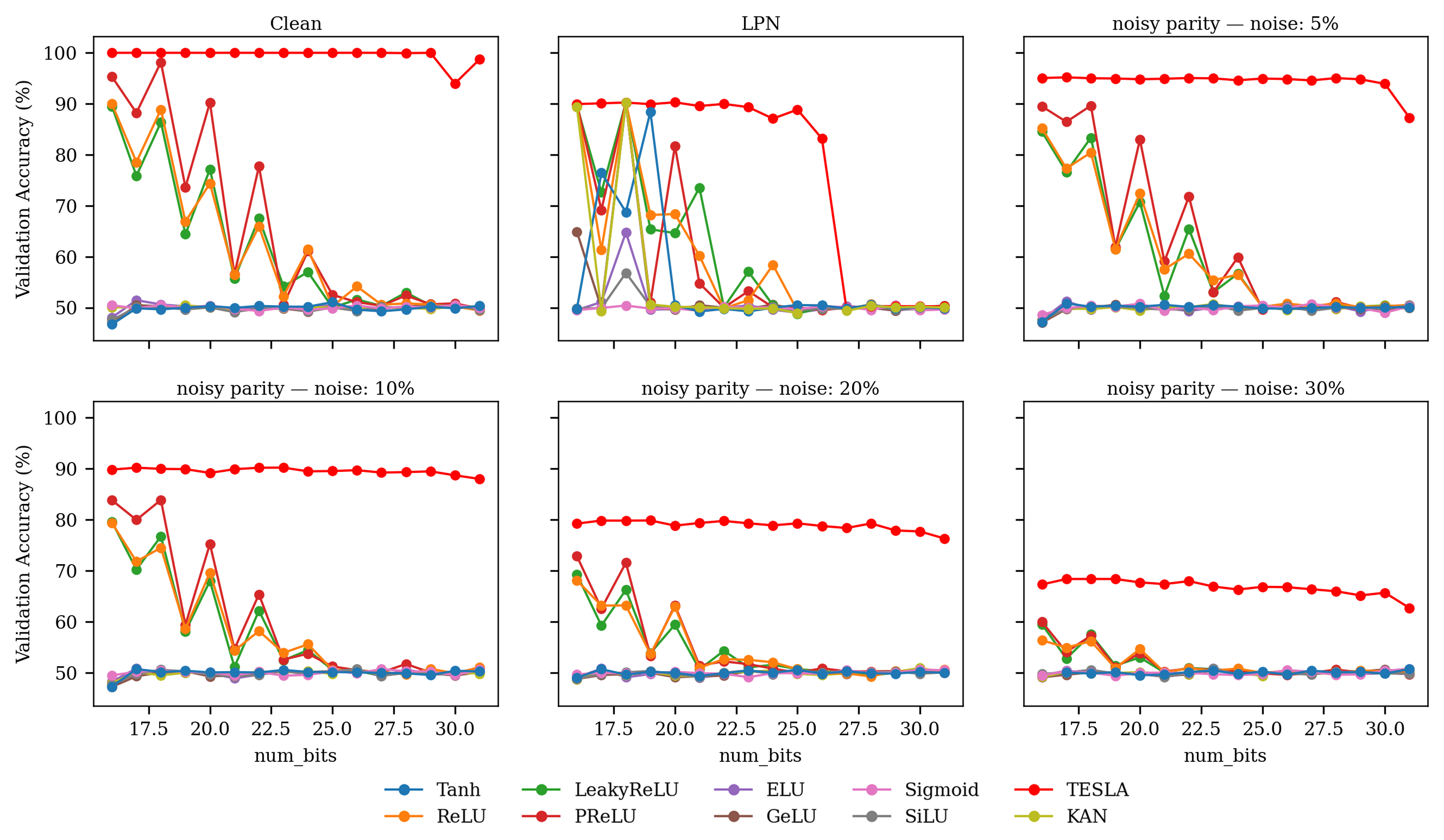}
 \caption{Validation accuracy (\%) for the parity task. Each panel shows a different noise condition: Clean, LPN (noise level = 10\%) and noisy-parity with $5\%$, $10\%$, $20\%$, $30\%$ label noise. 
 While most baseline activations remain near chance ($\approx 50\%$), TESLA maintains substantially higher accuracy across bit-lengths and noise levels.}
 \label{fig:main_acc_parity}
\end{figure*}

\subsection{Theory-Guided Choice of the Harmonic Count \texorpdfstring{$K$}{K}}
\label{sec:choose-K}
We give a compact, operational rule for how the harmonic count $K$ should scale with task complexity and sample size by balancing approximation and estimation.
Throughout this subsection, $N$ denotes the training sample size and $f^\star$ the target.
For approximation statements, we consider Boolean targets of the form $f^\star:\{0,1\}^d\to\{\pm1\}$, while using a continuous periodic proxy $f^\star:\mathbb T\to\mathbb R$ to interpret mode-wise NTK behavior.
Constants $c,C,C_0,\dots$ are absolute and may change value between displays.

\begin{figure*}[t]
  \centering
  \includegraphics[width=17cm]{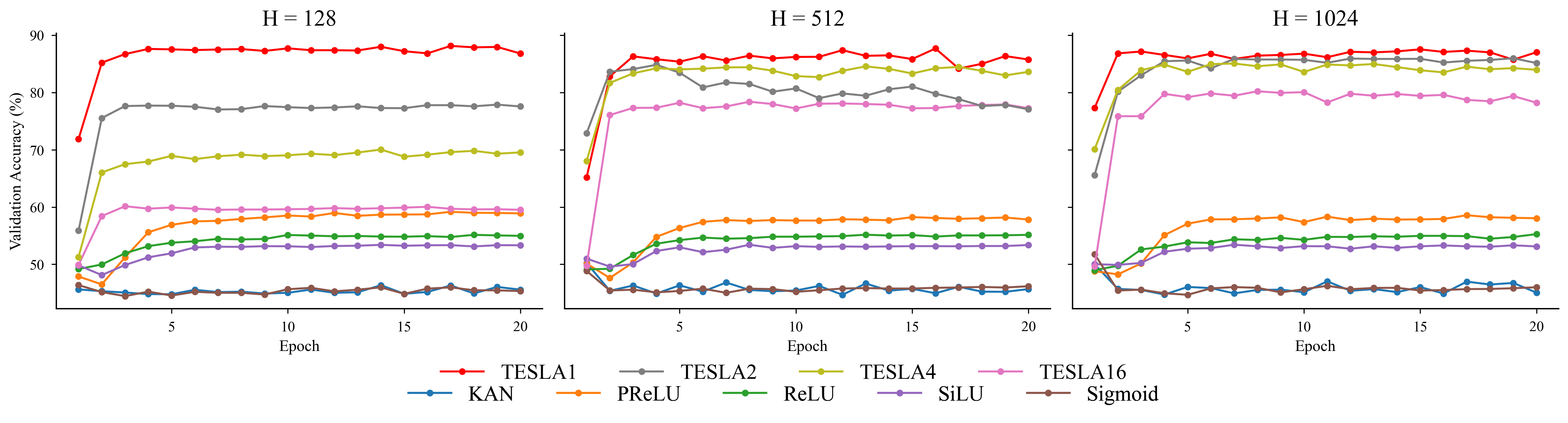}
  \caption{Validation accuracy (\%) for the Forrelation task with $d=12$ using a two-layer MLP. Each panel shows a different hidden size $H \in \{128, 512, 1024 \}$. Curves compare TESLA with $K \in \{1, 2, 4, 16\}$ against standard activations. 
  TESLA converges faster and achieves higher accuracy on this global-correlation task.}
  \label{fig:main_acc_forrelation}
\end{figure*}

\paragraph{Task Complexity via Effective Order.}
Write the Walsh expansion \citep{Walsh}
\[
f^\star(x)=\sum_{T\subseteq[d]}\hat f_T\,\chi_T(x). 
\]
For $\varepsilon\in(0,\tfrac12)$, define the effective interaction order
\[
m_{\mathrm{eff}}
:=\inf\Big\{m\in\{0,\dots,d\}\ :\ \sum_{|T|\le m}\hat f_T^2\ge 1-\varepsilon\Big\}.
\]
$m_{\mathrm{eff}}$ is the interaction order needed to capture most of the target energy (e.g., parity of order $m$ has $m_{\mathrm{eff}}=m$).

\paragraph{Approximation Behavior of $K$-harmonic Activations.}
Using Eq.~\eqref{eq:Tesla}, let the atoms $\{\sigma_k\}$ be normalized by $\|\sigma_k\|_\infty\le 1$ with $1/k$ scaling. Assume the best-in-class $L^2$ approximation error over $\mathcal{F}_K$ satisfies
\begin{align}
\label{eq:approx-decay-main}
\inf_{f\in\mathcal{F}_K}\ \mathbb{E}\!\left[(f(x)-f^\star(x))^2\right] \ \le\ B\!\Big(\frac{K}{m_{\mathrm{eff}}}\Big), \nonumber \\
 B(z)\in\{c_1 e^{-\gamma z},\ c_1 z^{-p}\},
\end{align}
for some $c_1,\gamma,p>0$; i.e., once $K=\Theta(m_{\mathrm{eff}})$, the approximation error decays monotonically.

\paragraph{Estimation Term for $\ell_1$-Budgeted Mixtures.}
Let $\hat{\mathfrak{R}}_N$ denote empirical Rademacher complexity on $N$ samples.
By the $\ell_1$-budgeted mixture bound (budget $A_K$), 
\begin{equation}
\label{eq:rad-mix-main}
\hat{\mathfrak{R}}_N(\mathcal{F}_K)\ \le\ C_0\,\frac{A_K}{\sqrt{N}}\sqrt{\log(2K)}.
\end{equation}
Consequently, for any $1$-Lipschitz surrogate loss $\mathcal{L}$,
with probability at least $1-\delta$,
\begin{align}
\label{eq:gen-main}
\sup_{f\in\mathcal{F}_K} \Big|\,\mathbb{E}[\mathcal{L}(f(x),y)]-\tfrac{1}{N}\sum_{i=1}^N \mathcal{L}(f(x_i),y_i)\,\Big| \nonumber \\
\le\ C_1\,\frac{A_K}{\sqrt{N}}\sqrt{\log(2K)}\ + C_2\sqrt{\tfrac{\log(1/\delta)}{N}}.
\end{align}

\paragraph{Excess-risk Bound and Optimal Scale.}
Let $\hat f_K\in\mathcal{F}_K$ minimize empirical surrogate risk.
Combining Eq.~\eqref{eq:approx-decay-main}–\eqref{eq:gen-main} yields
\begin{equation}
\label{eq:excess-main}
\mathcal{E}(\hat f_K)
\lesssim
\underbrace{B\!\Big(\tfrac{K}{m_{\mathrm{eff}}}\Big)}_{\text{approximation}}
+
\underbrace{C\,\frac{A_K}{\sqrt{N}}\sqrt{\log(2K)}}_{\text{estimation}}
+
C\sqrt{\tfrac{\log(1/\delta)}{N}}.
\end{equation}
Balancing the first two terms gives
\[
K^\star\ =\ \Theta\!\big(m_{\mathrm{eff}}\big)\cdot \kappa(N,A_K),
\]
where $\kappa$ varies slowly with $N$ and $A_K$:
for $B(z)=c_1 e^{-\gamma z}$, $\kappa=\Theta(\log N)$; 
for $B(z)=c_1 z^{-p}$, $\kappa=\Theta\!\big(N^{1/(2p)}(\log N)^{1/(2p)}\big)$ up to constants.
Intuitively, taking $K$ much larger than $m_{\mathrm{eff}}$ dilutes the per-harmonic budget (average amplitude $\sim A_K/K$) and increases estimation error.

In $d$-bit Parity, $m_{\mathrm{eff}}=d$, hence the theory predicts $K^\star=\Theta(d)$ up to slowly varying factors. 
Empirically (Sec.~\ref{sec:experiments}), we find a stable range $\kappa\in[0.2,0.4]$ at $N=10^5$, placing the optimum near $K\approx d/4$.
This aligns with the NTK view: the analytic TESLA kernel on $\mathbb{T}$ has eigenvalues $\lambda_m^{(\phi)}=\tfrac{1}{2m^2}$, but finite-feature and finite-sample effects flatten the empirical spectrum and emphasize medium--high modes. Thus, once $K\gtrsim m_{\mathrm{eff}}$, marginal approximation gains are offset by the estimation term in Eq.~\eqref{eq:excess-main}.

\section{EXPERIMENTS}
\label{sec:experiments}
\subsection{Comparisons with Periodic and Frequency-based Baselines}
\label{sec:periodic-baselines}
To provide a balanced comparison with periodic and Fourier-style activations, we include SIREN, SNAKE, and a Fourier-feature embedding baseline in both parity and Forrelation experiments, using matched architectures and training protocols unless otherwise noted.
For the Fourier-feature embedding baseline, we use a 64-dimensional input mapping with $\sin(2^k\pi x)$ and $\cos(2^k\pi x)$ for $k=0{:}31$, followed by the same MLP architecture with ReLU.

\subsection{Continuous-domain Evaluation}
\label{sec:continuous-domain}
Sinusoidal activations are often effective in continuous-domain learning problems. We therefore evaluate TESLA on three representative settings: (i) physics-informed neural networks (PINNs) for PDE solving, (ii) implicit neural representations (INRs) for coordinate-based image reconstruction, and (iii) mixed-frequency regression for explicit high-frequency signal modeling.

\paragraph{Physics-informed neural networks (PINNs).}
We train a PINN for the 1D viscous Burgers' equation. The total loss is
\begin{equation}
  \mathcal{L}_\text{total}
  = \mathcal{L}_\text{PDE}
  + 100\,\mathcal{L}_\text{IC}
  + 100\,\mathcal{L}_\text{BC},
\end{equation}
where $\mathcal{L}_\text{PDE}$, $\mathcal{L}_\text{IC}$, and $\mathcal{L}_\text{BC}$ are mean-squared errors of the PDE residual, initial condition, and boundary conditions, respectively.

\begin{table}[h]
\centering
\small
\setlength{\tabcolsep}{4pt}
\renewcommand{\arraystretch}{1.05}
\caption{1D Burgers' equation PINN: final losses after 20{,}000 epochs (MSE).}
\begin{adjustbox}{max width=\linewidth}
\begin{tabular}{lcccc}
\toprule
Activation &
$\mathcal{L}_\text{total}$ &
$\mathcal{L}_\text{PDE}$ &
$\mathcal{L}_\text{IC}$ &
$\mathcal{L}_\text{BC}$ \\
\midrule
ReLU  & $7.13\times 10^{-1}$ & $5.52\times 10^{-1}$ & $1.31\times 10^{-3}$ & $3.03\times 10^{-4}$ \\
Tanh  & $1.19\times 10^{-2}$ & $3.81\times 10^{-3}$ & $2.93\times 10^{-6}$ & $7.77\times 10^{-5}$ \\
SIREN & $1.04\times 10^{-2}$ & $6.00\times 10^{-3}$ & $1.06\times 10^{-5}$ & $3.38\times 10^{-5}$ \\
TESLA & $\mathbf{1.21\times 10^{-3}}$ & $\mathbf{5.01\times 10^{-4}}$ &
$\mathbf{1.11\times 10^{-6}}$ & $\mathbf{6.00\times 10^{-6}}$ \\
\bottomrule
\end{tabular}
\end{adjustbox}
\label{tab:burgers_pinn_final}
\end{table}

As shown in Table~\ref{tab:burgers_pinn_final}, TESLA achieves the lowest final loss on the Burgers' equation PINN among the compared activations.

\paragraph{Implicit neural representations (INRs).}
We evaluate INRs on Kodak24 and DIV2K, where a 6-layer MLP (512 hidden units) is trained per image for 3000 epochs under an identical training budget.

\begin{table}[h]
\centering
\small
\setlength{\tabcolsep}{4pt}
\renewcommand{\arraystretch}{1.05}
\caption{Implicit neural representations on Kodak24 and DIV2K. We report reconstruction quality (PSNR/SSIM) and the number of epochs needed to reach 25 dB (Epochs@25dB).}
\begin{adjustbox}{max width=\linewidth}
\begin{tabular}{l l c c c}
\toprule
Dataset & Activation & PSNR $\uparrow$ & SSIM $\uparrow$ & Epochs@25dB $\downarrow$ \\
\midrule
\multirow{4}{*}{Kodak24}
& TESLA & \textbf{27.72} & \textbf{0.9671} & \textbf{1595.8} \\
& SIREN & 25.33 & 0.9404 & 1887.5 \\
& ReLU  & 21.22 & 0.8389 & 3000.0 \\
& Tanh  & 20.17 & 0.7818 & 3000.0 \\
\midrule
\multirow{4}{*}{DIV2K}
& TESLA & \textbf{24.73} & \textbf{0.9522} & \textbf{2485.0} \\
& SIREN & 22.77 & 0.9301 & 2695.0 \\
& ReLU  & 18.21 & 0.8030 & 3000.0 \\
& Tanh  & 17.94 & 0.7819 & 3000.0 \\
\bottomrule
\end{tabular}
\end{adjustbox}
\label{tab:inr_activation}
\end{table}

\paragraph{Mixed-frequency regression.}
We regress a target signal composed of mixed sine/cosine components from 3.29~Hz to 79.90~Hz. Table~\ref{tab:combined_results} reports the dominant frequencies in the target (left) and train/test MSE of different activations (right).

\begin{table}[h]
    \centering
    \caption{Left: top 5 dominant frequencies of the target synthetic function. Right: train/test MSE comparison across activation functions for mixed-frequency signal modeling.}
    \label{tab:combined_results}
    \begin{minipage}[t]{0.55\linewidth}
        \centering
        \begin{adjustbox}{max width=\linewidth}
        \begin{tabular}{cccc}
            \toprule
            \textbf{Rank} & \textbf{Freq (Hz)} & \textbf{Amp} & \textbf{Phase} \\
            \midrule
            1 & 79.19 & 0.98 & 1.79 \\
            2 & 79.23 & 0.96 & 4.41 \\
            3 & 9.78  & 0.95 & 5.97 \\
            4 & 18.02 & 0.92 & 1.42 \\
            5 & 3.29  & 0.89 & 4.72 \\
            \bottomrule
        \end{tabular}
        \end{adjustbox}
    \end{minipage}
    \hfill
    \begin{minipage}[t]{0.43\linewidth}
        \centering
        \begin{adjustbox}{max width=\linewidth}
        \begin{tabular}{lcc}
            \toprule
            \textbf{Model} & \textbf{Train} $\downarrow$ & \textbf{Test} $\downarrow$ \\
            \midrule
            ReLU  & 0.0682 & 0.0681 \\
            GeLU  & 0.0752 & 0.0756 \\
            SiLU  & 0.0753 & 0.0756 \\
            SIREN & 0.0236 & 0.0248 \\
            TESLA & \textbf{0.0097} & \textbf{0.0100} \\
            \bottomrule
        \end{tabular}
        \end{adjustbox}
    \end{minipage}
\end{table}
In this mixed-frequency setting, TESLA is the only method with test MSE below 0.011.

\subsection{Parity with Label Noise: Task, Metrics, and Interpretation}
\label{sec:parity-noise}

\paragraph{Task.}
We evaluate on Parity and global-statistics problems where the target depends on a high-order global interaction of the input bits. 
Let $x\in\{0,1\}^d$ and $S\subseteq[d]$ with $|S|=m$. The clean label is
\[
y_{\mathrm{clean}}=\bigoplus_{i\in S} x_i \in\{0,1\},
\]
i.e., the parity of the selected bits.\footnote{We use a fixed subset $S$ per run; $d$ (number of bits) controls task difficulty.} 
This benchmark is intentionally adversarial to local, piecewise-linear activations, as the Bayes-optimal classifier is a global rule with a single nonzero Fourier coefficient at frequency $S$.

\paragraph{Data Generation, Splits, and Noise Settings.}
For each run, we sample i.i.d. inputs uniformly from $\{0,1\}^d$ and assign splits deterministically via a fixed hash partition $h(x)\bmod 3$, ensuring disjoint and reproducible train/validation/test sets (i.e., no leakage). We then sample without replacement within each split, using 100{,}000 training samples and 20{,}000 validation samples for all $d$. We evaluate three settings: (i) clean parity, (ii) noisy parity with independent label flips $p \in\{0.05,0.10,0.20,0.30\}$, and (iii) LPN; in the noisy setting, train and test use the same flip rate $p$.

\paragraph{LPN.}
We also evaluate the classical LPN (Learning Parity with Noise) variant, where the relevant subset is unknown and must be inferred from data. 
Fix a hidden secret vector $s\in\{0,1\}^d$ and draw query vectors $a\sim\mathrm{Unif}(\{0,1\}^d)$ i.i.d. The observed label is
\[
y \;=\; \langle a, s\rangle \bmod 2 \;\oplus\; e,\quad e\sim\mathrm{Bernoulli}(\eta).
\]
Compared to the noisy parity setting in Sec.~\ref{sec:parity-noise}---where the subset $S$ is fixed per run and implicitly known to the data generator---LPN additionally hides $S$ (equivalently $s$), making both identification and optimization more challenging at larger $d$.

\paragraph{Metrics and Bayes Limit.}
With symmetric label noise (flip rate $p<\tfrac12$) on the test set, perfect recovery of the underlying rule cannot yield $100\%$ measured accuracy. The Bayes-optimal expected test accuracy under symmetric, input-independent label flips at rate $p$ is $A_{\max}(p)=1-p$.

\subsection{Forrelation}
Forrelation \citep{forrelation} is a decision problem on pairs of Boolean functions $f, g: \{0,1\}^n \rightarrow \{ \pm 1 \}$. Let $M=2^n$. The goal is to decide whether the correlation
\begin{equation*}
    \Phi_{f,g} = \frac{1}{\sqrt{M}} \langle Hf, g \rangle
\end{equation*}
is high or low under a standard promise. Here $H$ is the Hadamard matrix and $\langle \cdot, \cdot \rangle$ denotes the inner product. We treat $\Phi_{f,g} \ge 0.6$ as positive and $|\Phi_{f,g}| \le 0.01$ as negative. The property of being forrelated is global. A classifier must integrate information spread across the entire pair of truth tables. We use a two-layer MLP that takes as input the concatenation of the truth tables of $f$ and $g$ with length $2^{n+1}$ and outputs a single logit for binary prediction. We train and evaluate on a synthetic dataset with $n=12$, consisting of 10{,}000 training examples and 10{,}000 test examples, where labels are assigned as $y=1$ if $\Phi_{f,g} \ge 0.6$ and $y=0$ if $| \Phi_{f,g}| \le 0.01$, excluding samples with intermediate values.

\subsection{ImageNet-100 Classification}
To empirically validate the performance and efficiency of our proposed activation function, TESLA, we conducted experiments on the ImageNet-100 dataset \citep{ImageNet}, a standard benchmark for image classification. To demonstrate TESLA's general applicability, our evaluation includes both Transformer-based models (ViT \citep{Vit}, MLP-Mixer \citep{MLP-mixer}) and CNN-based models (ResNet \citep{Resnet}, MobileNetV3 \citep{MobileNetv3}). We compare TESLA's performance against conventional activation functions, namely ReLU \citep{ReLU}, GeLU \citep{GeLU}, and SiLU \citep{Silu}. TESLA uses $K=2$ for ViT-T/16, ResNet-18, and MobileNetV3, and $K=6$ for MLP-Mixer. The evaluation is based on a comprehensive analysis of not only the classification performance (Top-1 Accuracy) but also key computational efficiency metrics, including FLOPs, throughput (imgs/s), and the number of parameters. Table~\ref{tab:act_compute_acc} shows these results. The top-1 accuracy is reported as the mean and standard deviation over three seeds and both the measurement of FLOPs and throughput measurement are conducted on a single GPU.

\begin{table}[t]
\centering
\caption{ImageNet-100 results across architectures comparing TESLA with common activations. Top-1 accuracy is mean $\pm$ std (3 seeds). Bold indicates best per architecture. FLOPs reported in GigaFLOPs, throughput (imgs/s) measured on a single GPU, and parameters (Params) in millions.}
\resizebox{8.5cm}{!}{\begin{tabular}{lcccccc}
\toprule
Model & Act & Top-1 (\%) & FLOPs & imgs/s & Params \\
\midrule
\multirow{4}{*}{ViT-T/16}  & ReLU   & $62.49 \pm 0.73$ & 1.08 & 7{,}512 & 5.7 \\
                           & GeLU   & $\mathbf{63.67 \pm 0.76}$ & 1.08 & 7{,}491 & 5.7 \\
                           & SiLU   & $63.60 \pm 0.26$ & 1.08 & 7{,}482 & 5.7 \\
                           & \textbf{TESLA} & $62.53 \pm 0.71$ & 1.09 & 7{,}509 & 5.7 \\
\midrule
\multirow{4}{*}{MLP-Mixer-b16} & ReLU & $57.81 \pm 0.13$ & 12.62 & 1{,}151 & 59.9 \\
                           & GeLU & $57.42 \pm 0.01$ & 12.62 & 1{,}144 & 59.9 \\
                           & SiLU & $58.36 \pm 0.91$ & 12.62 & 1{,}146 & 59.9 \\
                           & \textbf{TESLA} & $\mathbf{58.79 \pm 0.02}$ & 12.75 & 1{,}139 & 59.9 \\
\midrule
\multirow{4}{*}{ResNet-18} & ReLU  & $73.14 \pm 0.31$ & 1.82 & 8{,}447 & 11.7 \\
                           & GeLU  & $73.93 \pm 0.47$ & 1.82 & 8{,}307 & 11.7 \\
                           & SiLU  & $73.99 \pm 0.66$ & 1.82 & 8{,}326 & 11.7 \\
                           & \textbf{TESLA} & $\mathbf{74.01 \pm 0.33}$ & 1.82 & 8{,}433 & 11.7 \\
\midrule
\multirow{4}{*}{MobileNetV3-S} & ReLU  & $68.20 \pm 0.16$  & 0.05 & 35{,}607 & 2.0 \\
                           & GeLU  & $\mathbf{68.23 \pm 0.94}$ & 0.05 & 35{,}082 & 2.0 \\
                           & SiLU  & $67.92 \pm 0.15$ & 0.05 & 35{,}056 & 2.0 \\
                           & \textbf{TESLA} & $67.82 \pm 0.17$ & 0.05 & 35{,}483 & 2.0 \\
\bottomrule
\end{tabular}
}
\label{tab:act_compute_acc}
\end{table}

\subsection{Result Analysis}
\paragraph{Parity.} Figure~\ref{fig:main_acc_parity} shows the performance of TESLA and baseline models. For the parity experiments we use a fully-connected MLP with 2 hidden layers and hidden dimension $H=128$.
Across the plotted settings, TESLA matches the Bayes limit across noise levels while remaining near $100\%$ on the clean task. For $p\in\{0.05,0.10,0.20,0.30\}$, TESLA’s final accuracy concentrates near $\{95\%,90\%,80\%,70\%\}$. In contrast, common activations always collapse toward 50\% as the number of bits grows, which is no better than random guessing of parity. In the LPN setting, TESLA sustains high accuracy up to 27-bits before degrading in the hardest regimes.

\paragraph{Periodic baselines on parity.}
Table~\ref{tab:parity_noise} compares TESLA against periodic and frequency-based baselines. For easier settings (16--20 bits), TESLA and SIREN both reach 100\% accuracy under no noise. However, as the bit-length and noise increase, SIREN collapses to random guessing ($\approx 50\%$) while TESLA remains stable (e.g., at 24 bits with 30\% noise, TESLA achieves 65.53\% vs.\ 49.47\% for SIREN; at 32 bits with no noise, TESLA achieves 95.87\% vs.\ 50.10\% for SIREN). SNAKE remains near chance across all settings, and Fourier-Emb similarly fails to capture the global interaction signal in these regimes.

\begin{table}[t]
\centering
\small
\setlength{\tabcolsep}{3.5pt}
\begin{tabular}{cccccc}
\toprule
\multirow{2}{*}{Bits} & \multirow{2}{*}{Activation} & \multicolumn{4}{c}{Noise probability} \\
\cmidrule(lr){3-6}
 &  & 0.0 & 0.1 & 0.2 & 0.3 \\
\midrule
\multirow{4}{*}{16}
 & TESLA($K=8$)          & \textbf{100.00} & 89.80 & 79.22 & 68.00 \\
 & SIREN($w_0=30.0$)    & \textbf{100.00} & \textbf{89.83} & \textbf{79.61} & \textbf{69.53} \\
 & SNAKE($\alpha=1.0$)  & 49.95  & 49.97 & 49.58 & 49.89 \\
 & Fourier-Emb          & 73.21  & 50.19 & 50.09 & 50.34 \\
\midrule
\multirow{4}{*}{20}
 & TESLA($K=8$)          & \textbf{100.00} & 89.52 & 78.78 & 67.14 \\
 & SIREN($w_0=30.0$)    & \textbf{100.00} & \textbf{89.74} & \textbf{79.42} & \textbf{69.67} \\
 & SNAKE($\alpha=1.0$)  & 50.01  & 50.02 & 50.06 & 50.07 \\
 & Fourier-Emb          & 52.13  & 49.68 & 50.12 & 50.33 \\
\midrule
\multirow{4}{*}{24}
 & TESLA($K=8$)          & \textbf{100.00} & 89.75 & 78.50 & \textbf{65.53} \\
 & SIREN($w_0=30.0$)    & \textbf{100.00} & \textbf{90.08} & \textbf{80.06} & 49.47 \\
 & SNAKE($\alpha=1.0$)  & 50.24  & 49.90 & 50.52 & 49.97 \\
 & Fourier-Emb          & 50.55  & 49.20 & 50.20 & 50.00 \\
\midrule
\multirow{4}{*}{28}
 & TESLA($K=8$)          & \textbf{96.97}  & \textbf{89.61} & \textbf{78.10} & \textbf{68.14} \\
 & SIREN($w_0=30.0$)    & 50.48  & 50.52 & 49.58 & 49.78 \\
 & SNAKE($\alpha=1.0$)  & 49.95  & 49.36 & 50.02 & 50.19 \\
 & Fourier-Emb          & 49.74  & 50.45 & 50.27 & 49.81 \\
\midrule
\multirow{4}{*}{32}
 & TESLA($K=8$)          & \textbf{95.87}  & \textbf{85.78} & \textbf{78.05} & \textbf{68.78} \\
 & SIREN($w_0=30.0$)    & 50.10  & 49.89 & 50.02 & 49.95 \\
 & SNAKE($\alpha=1.0$)  & 50.24  & 50.37 & 50.34 & 50.10 \\
 & Fourier-Emb          & 50.06  & 49.42 & 50.12 & 50.48 \\
\bottomrule
\end{tabular}
\caption{Parity with long bit-length and label noise: validation accuracy(\%) (20 epochs) under matched architectures and training settings.}
\label{tab:parity_noise}
\end{table}

\paragraph{Forrelation.} We evaluate a two-layer MLP on the $n=12$ Forrelation dataset. Figure~\ref{fig:main_acc_forrelation} reports validation accuracy over epochs for hidden sizes $H=\{128,512,1024\}$. TESLA consistently learns faster and reaches higher final accuracy than standard activations across all $K$ settings. Larger hidden sizes improve performance for every method. The gap in favor of TESLA remains, which indicates that TESLA captures the global correlation signal more effectively.

% 수정 후
\paragraph{ImageNet-100.}
Table~\ref{tab:act_compute_acc} reports results on ViT-T/16, MLP-Mixer, ResNet-18, and MobileNetV3. To ensure architecture-aware comparison, we keep activations inside convolutional blocks unchanged for CNNs and replace only non-convolutional activations (e.g., MLP heads, projection/FFN layers) with TESLA; for ViT and MLP-Mixer, we replace all pointwise activations. TESLA is trained with an explicit $\ell_1$ penalty enforcing a layer-wise coefficient budget $A_K$ for stability and generalization control. Across models, TESLA remains competitive in Top-1 accuracy while keeping FLOPs and throughput within about 1\% of standard activations. These results indicate that TESLA extends beyond synthetic parity-style benchmarks as a practical drop-in replacement for standard activations in realistic vision workloads.

\paragraph{Stability vs.\ sinusoidal baselines.}
To assess whether TESLA remains numerically stable in practical architectures, we also compare against SIREN under the same ImageNet-100 training settings. In these runs, SIREN exhibits severe optimization instability and much lower accuracy, whereas TESLA trains stably and reaches standard accuracy levels (see Table~\ref{tab:siren_vs_tesla_imagenet100}).

\begin{table}[t]
\centering
\small
\setlength{\tabcolsep}{6pt}
\renewcommand{\arraystretch}{1.12}
\begin{tabular}{l l c}
\toprule
Model & Act & Top-1 (\%) \\
\midrule
\multirow{2}{*}{ViT-T/16}      & SIREN & $4.11 \pm 0.67$ \\
                                 & TESLA & $\mathbf{62.53 \pm 0.71}$ \\
\midrule
\multirow{2}{*}{MLP-Mixer-b16} & SIREN & $32.41 \pm 10.45$ \\
                                 & TESLA & $\mathbf{58.79 \pm 0.02}$ \\
\midrule
\multirow{2}{*}{ResNet-18}     & SIREN & $37.41 \pm 1.34$ \\
                                 & TESLA & $\mathbf{74.01 \pm 0.33}$ \\
\midrule
\multirow{2}{*}{MobileNetV3-S} & SIREN & $39.53 \pm 0.85$ \\
                                 & TESLA & $\mathbf{67.82 \pm 0.17}$ \\
\bottomrule
\end{tabular}
\caption{ImageNet-100 comparison between TESLA and SIREN under identical training settings. Top-1 accuracy is mean $\pm$ std.}
\label{tab:siren_vs_tesla_imagenet100}
\end{table}

\section{FUTURE WORK}
TESLA opens several promising avenues for future research. Theoretically, we plan to extend our continuous Fourier and NTK analysis to discrete Boolean domains to establish rigorous approximation bounds and explain the empirical gains observed on parity-type tasks. On the systems side, we will investigate sparse factorizations, low-bit quantization, and hardware-aware implementations to preserve TESLA’s spectral benefits while minimizing inference latency. Finally, we aim to integrate TESLA into large-scale Transformers and LLMs—such as augmenting FFN activations or positional encodings—to evaluate its impact on long-range reasoning, compositional generalization, and sample efficiency.

\section{CONCLUSION}
In this work, we introduced TESLA, a learnable sinusoidal activation that enables explicit degree/frequency control while preserving stable optimization through coefficient budgeting. We provided Lipschitz and Rademacher-complexity bounds and analyzed mode-wise learning dynamics to connect the parameterization to frequency-selective behavior.

Empirically, TESLA consistently improves tasks that require global, high-order interactions. On parity with label noise, it remains well above chance at larger bit-lengths and heavy corruption, and is more stable than periodic/frequency-based baselines (SIREN, SNAKE, and Fourier feature embeddings) in the hardest regimes. On Forrelation, TESLA achieves the highest accuracy across widths, while periodic and Fourier-style baselines remain near chance. On continuous-domain tasks (PINNs, INRs, and mixed-frequency regression), TESLA is competitive with or better than both standard and sinusoidal activations. On ImageNet-100, TESLA remains competitive across architectures with modest overhead and is substantially more stable than SIREN. Overall, TESLA suggests that activation-level degree control is a robust, practical alternative to input-only spectralization.

\newpage
\bibliographystyle{apalike}
\bibliography{reference}

\clearpage
\setcounter{section}{0}
\setcounter{subsection}{0}
\setcounter{figure}{0}
\setcounter{table}{0}
\setcounter{equation}{0}
\renewcommand{\theHsection}{supp.\arabic{section}}
\renewcommand{\theHsubsection}{supp.\arabic{section}.\arabic{subsection}}
\renewcommand{\theHfigure}{supp.\arabic{figure}}
\renewcommand{\theHtable}{supp.\arabic{table}}
\renewcommand{\theHequation}{supp.\arabic{equation}}
\fancyhead[CE,CO]{\small\bfseries TESLA: Sinusoidal Learnable Activations}
\input{appendix_body}

\end{document}

%% file: appendix_body.tex
\runningtitle{TESLA: Sinusoidal Learnable Activations}

\runningauthor{Daehwa Ko, Jaehyeon Kim, Seunghyun Ham, Jay Hoon Jung}

% AISTATS supplementary material must use a single-column layout.
\onecolumn
\aistatstitle{TESLA: Taylor Expansion of Sinusoidal Learnable Activations \\
Supplementary Materials}
\thispagestyle{empty}

\section{Experiment environment summary}
\label{appendix:repro}

This appendix provides details, such as hyperparameter, computer hardware and software environment information, necessary to reproduce the experiments reported in the main text. $K$ denotes the number of sinusoidal harmonics in each TESLA layer, and $A_K$ is the $l_1$-style coefficient budget for controlling their amplitude.

\begin{table*}[h]
  \centering
  \caption{Hyperparameter summary for main experiments. Each reported experiment in the paper uses the settings below unless noted. All task were conducted on backbone with MLP, 2 layers, and optimizer with Adam and $A_k = 0$.}
  \label{tab:hyperparams}
  \begin{tabular}{lcccccc}
    \toprule
    Task & Width & $K$ & LR & Batch Size & Epochs\\
    \midrule
    Parity ($d=16-32$, $n=100k$) & 128 & d/4 & 1e-3 & 1024 & 30 \\
    Forrelation ($d=12$) & [128, 512, 1024] & [1, 2, 4, 16] & 1e-3 & 128 & 20 \\
    LPN (noise $=0.1$) & 128 & d/4 & 5e-4 & 1024 & 30  \\
    \bottomrule
  \end{tabular}
\end{table*}

\begin{table*}[h]
  \centering
  \caption{Hyperparameter summary for ImageNet-100. All models are trained with AdamW for 50 epochs using cosine LR decay. Warmup uses a linear schedule; Warmup ratio is the LR multiplier at the first warmup step. All backbones fix the batch size to 128, set the learning rate to 1e-3, perform warm-up for 5 epochs, and set the warm-up ratio to 0.85.}
  \label{tab:hyperparams_image}
  \setlength{\tabcolsep}{8pt}
  \begin{tabular}{lcccccc}
    \toprule
    Backbone & $K$ & $A_{K}$ \\
    \midrule
    ViT-T/16       & 2 & 1e-3 \\
    MLP-Mixer-b16  & 6 & 1e-1 \\
    ResNet-18      & 2 & 1e-3 \\
    MobileNetV3-S  & 2 & 1e-3 \\
    \bottomrule
  \end{tabular}
\end{table*}

\begin{table}[h]
  \centering
  \caption{Compute and software environment used for all experiments unless otherwise noted.}
  \label{tab:env}
  \begin{tabular}{@{}p{0.30\linewidth}p{0.66\linewidth}@{}}
    \toprule
    \textbf{Hardware} & NVIDIA A100 80GB (1x), AMD EPYC 7543 32-Core Processor, 256GB RAM \\
    \textbf{OS} & Ubuntu 22.04 LTS \\
    \textbf{Python / Framework} & Python 3.10, PyTorch 2.0+ \\
    \textbf{CUDA / cuDNN / Driver} & CUDA 11.8, cuDNN 8.x, NVIDIA Driver 535+ \\
    \textbf{Key Python deps} & numpy 1.26, scipy 1.14, triton 3.1.0, tqdm 4.x \\
    \textbf{Reproducibility} & Fixed seeds $\{0,1,2\}$; all results reported as mean $\pm$ std over seeds. \\
    \bottomrule
  \end{tabular}
\end{table}

\section{Representations and Generalization on Toy Tasks: Synthetic Data \& Parity}
This section complements the theory by visualizing how TESLA differs from standard activations on synthetic 2D tasks and on the 10-bit parity mapping.

% \paragraph{Observations.}
% (\textit{i}) On Spiral/Checkerboard, TESLA yields smooth, globally consistent boundaries and retains oscillatory structure, while ReLU/KAN exhibit piecewise distortions or aliasing.
% (\textit{ii}) On parity, TESLA with modest $K$ already recovers the high-frequency checker pattern, in line with the mode-wise spectrum of the TESLA NTK (Section~\ref{sec:modewise}); increasing $K$ sharpens the grid without introducing spurious transitions.
% (\textit{iii}) These qualitative trends match the generalization story in Section~\ref{sec:modewise}: medium--high modes receive relatively larger eigenvalues under TESLA, accelerating recovery of higher-order structure.

\begin{figure*}[t]
  \centering
  \begin{subfigure}[b]{0.48\textwidth}
    \centering
    \includegraphics[width=\linewidth]{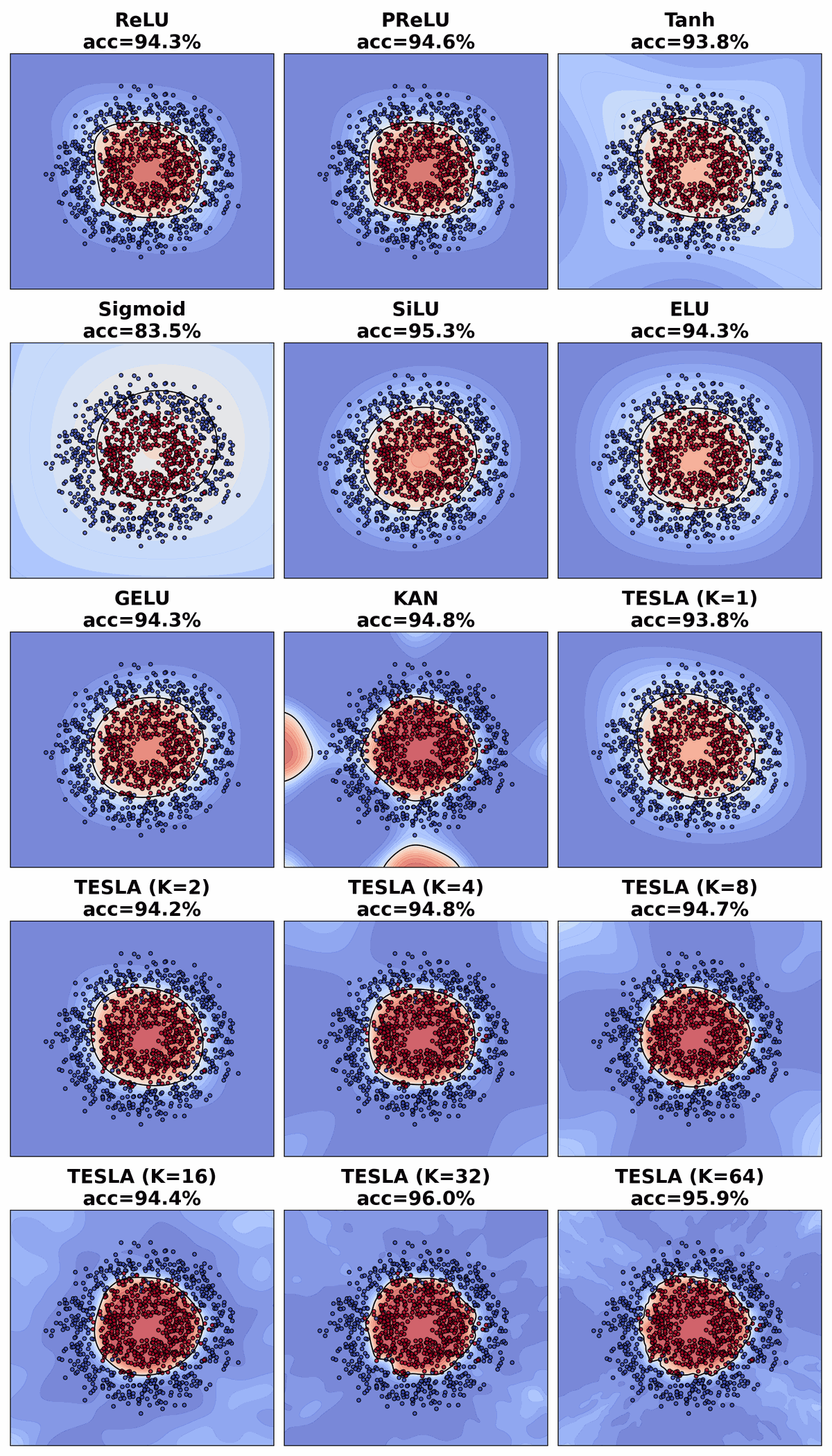}
    \caption{\textbf{Concentric Circles}.}
    \label{fig:dec_circles}
  \end{subfigure}
  \hfill
  \begin{subfigure}[b]{0.48\textwidth}
    \centering
    \includegraphics[width=\linewidth]{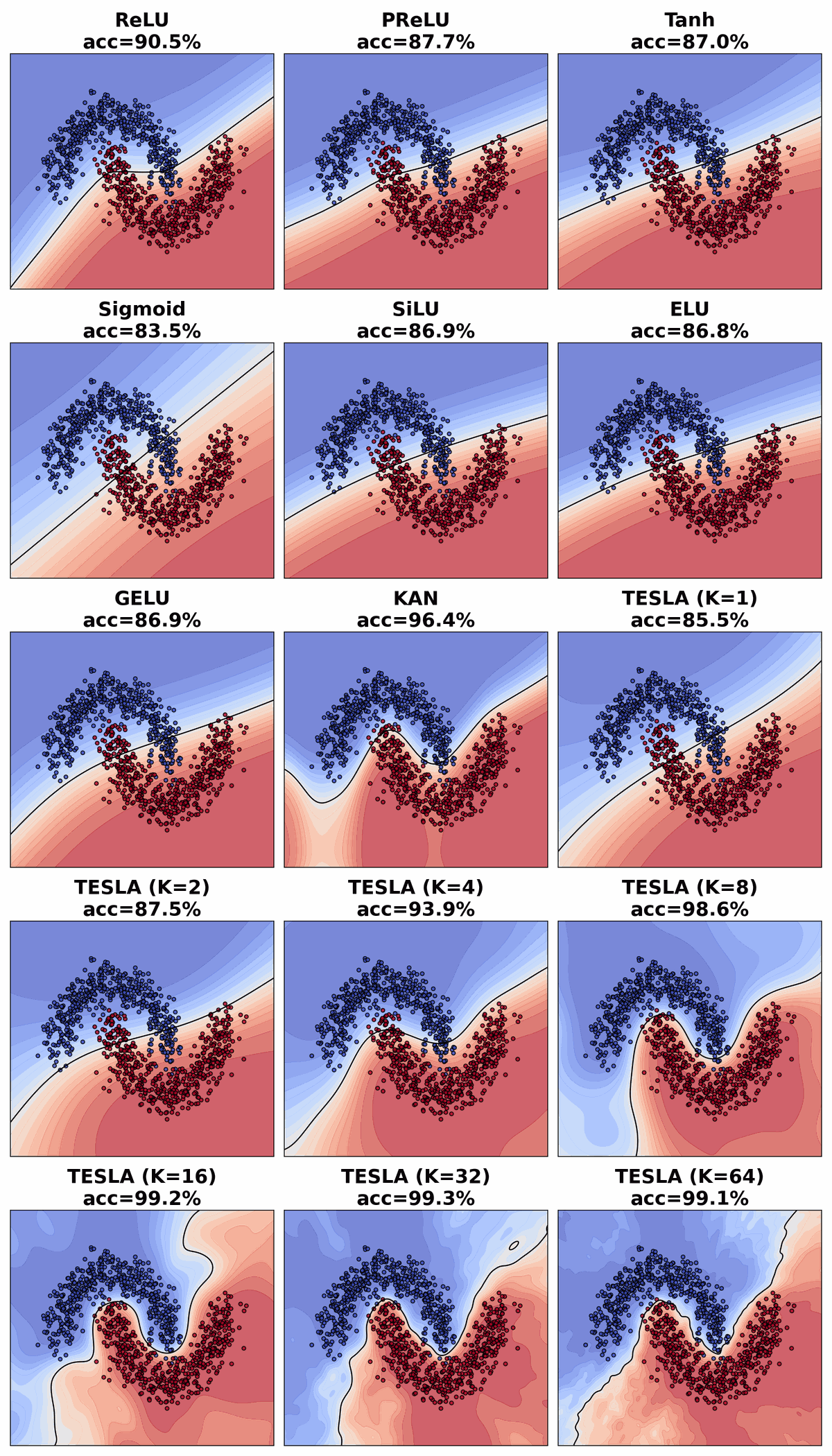}
    \caption{\textbf{Moons}.}
    \label{fig:dec_moons}
  \end{subfigure}
  \caption{Decision-boundary comparison on two synthetic datasets. A two-layer MLP with 128 hidden units is trained for 200 epochs on 1,000 samples. TESLA is evaluated with multiple $K$ and compared against KAN and standard activations. Subpanel titles report training accuracy in percent. All subpanels share the same color and legend scale.}
  \label{fig:decision_boundaries_set1}
\end{figure*}

\begin{figure*}[t]
  \centering
  \begin{subfigure}[b]{0.48\textwidth}
    \centering
    \includegraphics[width=\linewidth]{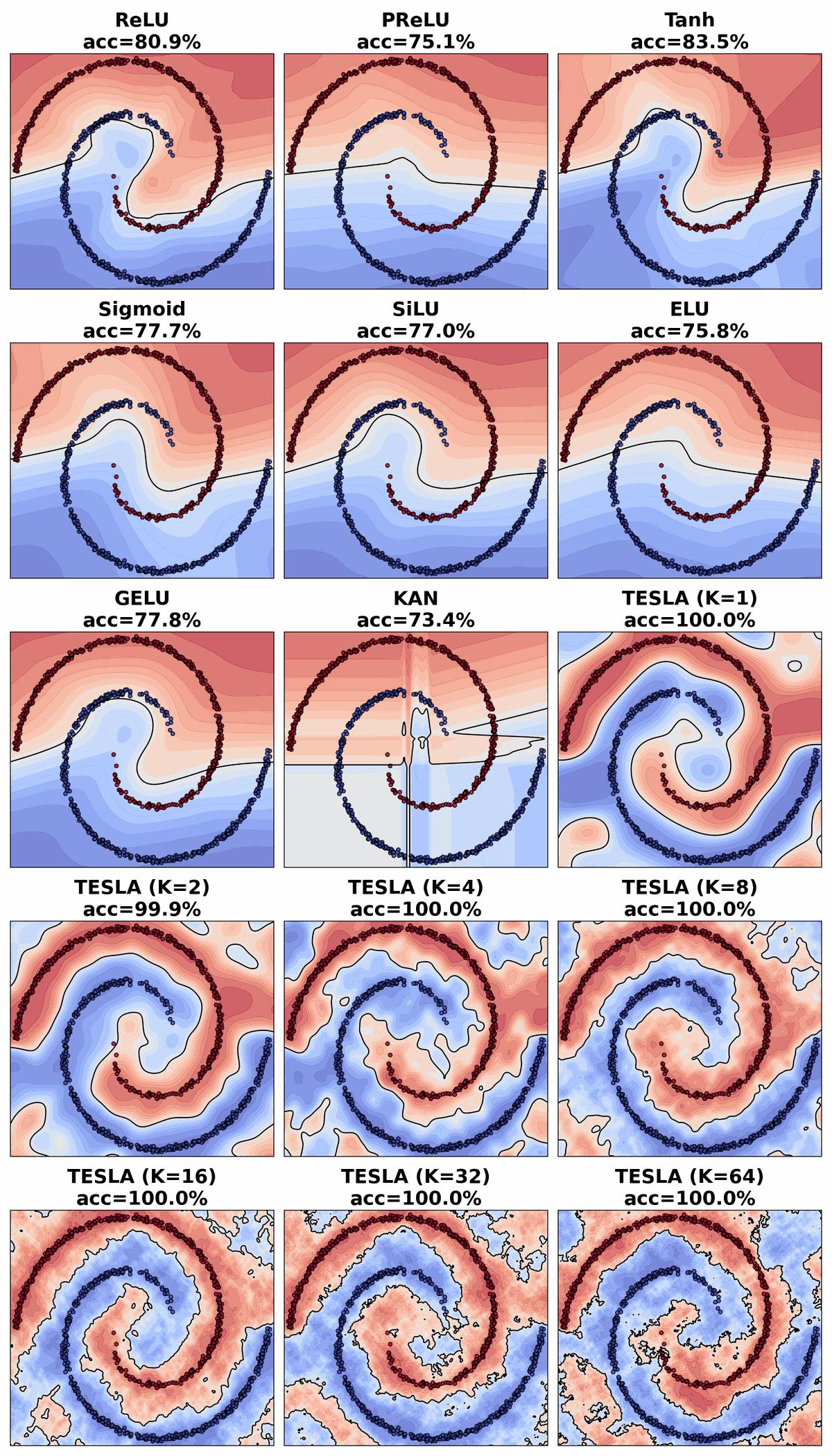}
    \caption{\textbf{Spiral}.}
    \label{fig:dec_spiral}
  \end{subfigure}
  \hfill
  \begin{subfigure}[b]{0.48\textwidth}
    \centering
    \includegraphics[width=\linewidth]{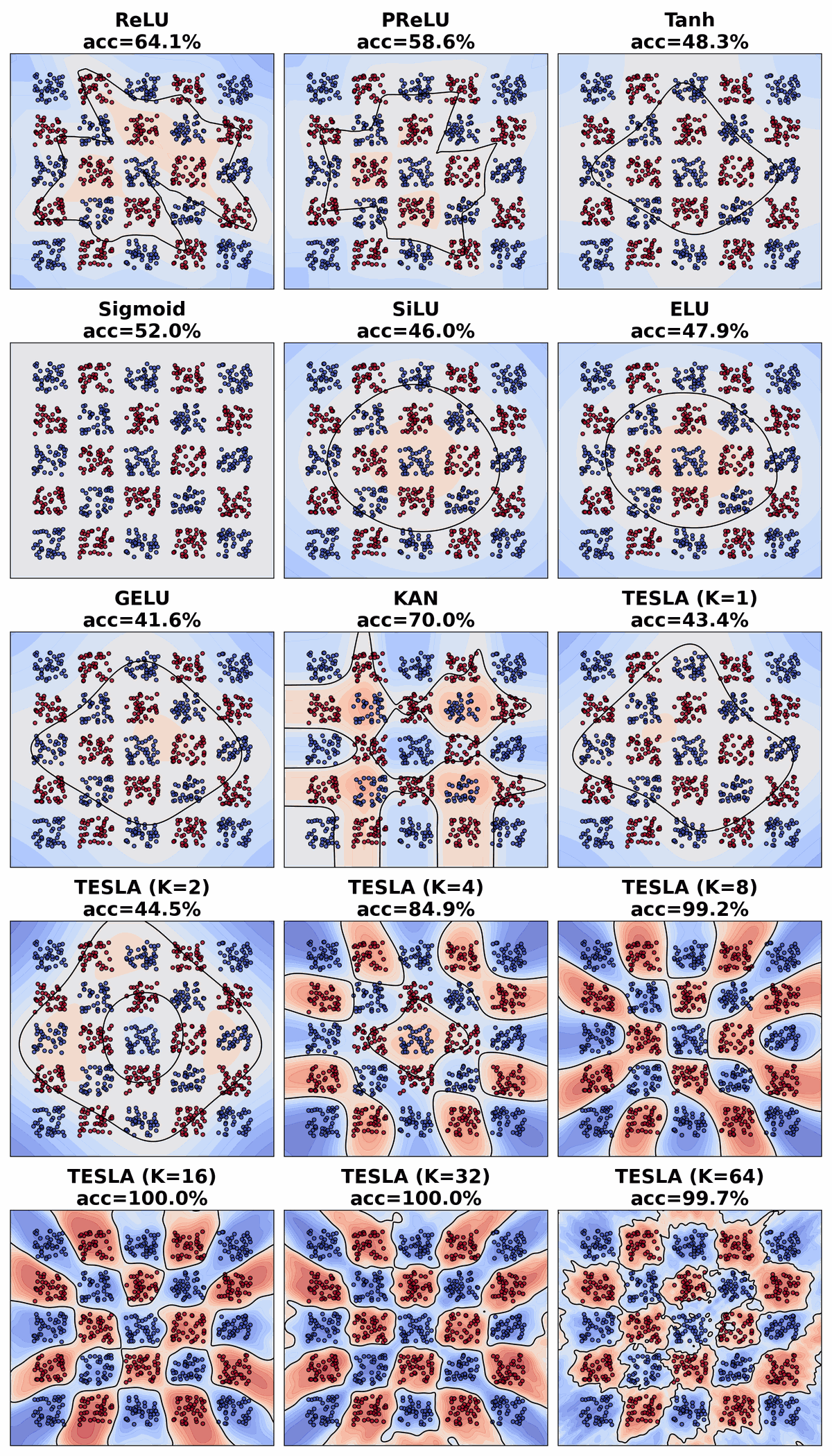}
    \caption{\textbf{Checkerboard}.}
    \label{fig:dec_checkerboard}
  \end{subfigure}
  \caption{Decision-boundary comparison on two additional datasets. Training setup matches Figure \ref{fig:decision_boundaries_set1}. TESLA consistently recovers global or oscillatory structure, whereas baselines often miss these patterns. Subpanel titles report training accuracy in percent. Color and legend scales are shared across subpanels.}
  \label{fig:decision_boundaries_set2}
\end{figure*}

\begin{figure*}[t]
    \centering
    \includegraphics[width=0.90\textwidth]{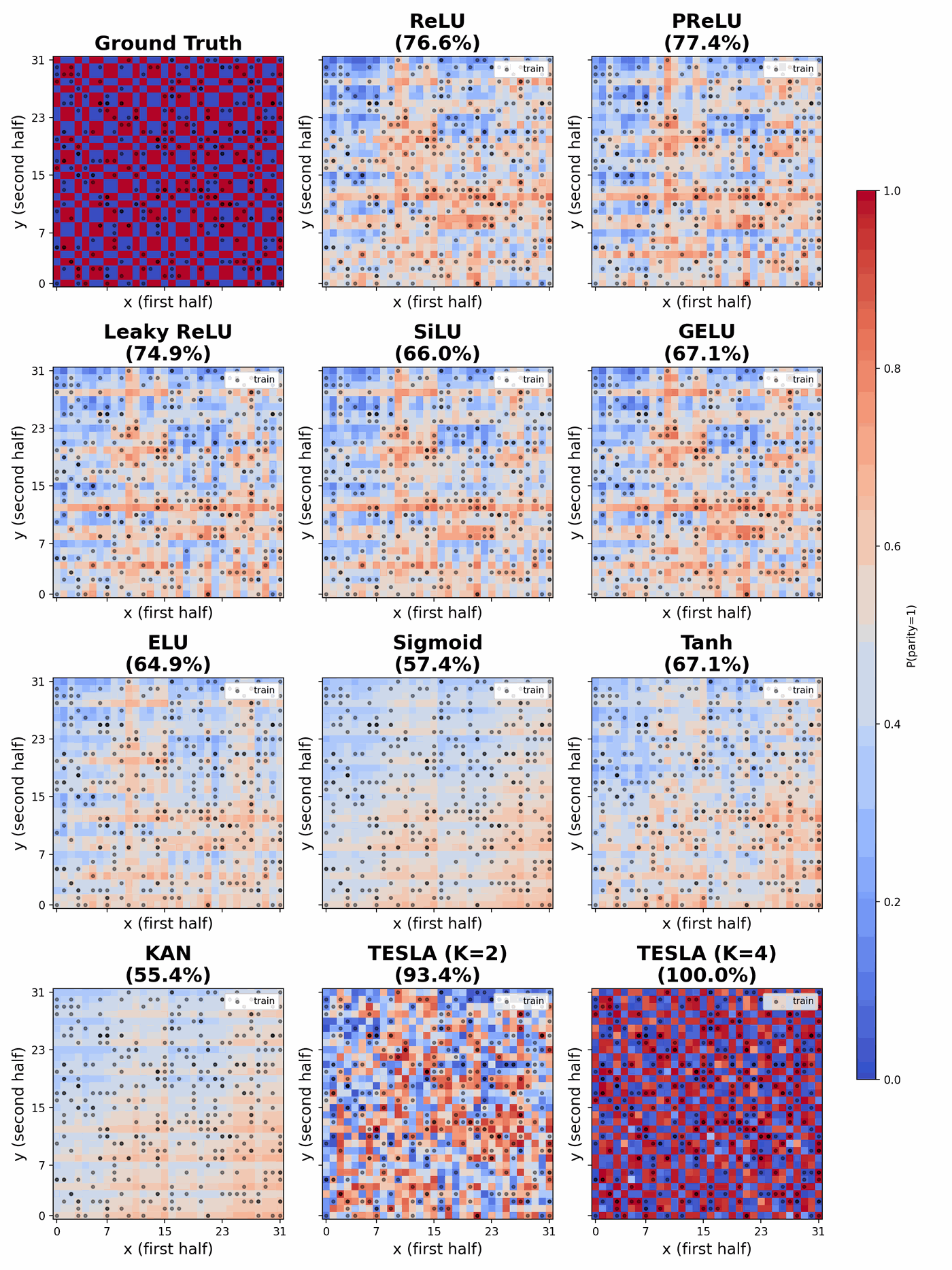}
    \caption{\textbf{Parity decision surfaces on 10-bit inputs.} A two-layer MLP (hidden = 128, lr = 1e-3, 50 epochs) is trained on 350 samples with different activations. Each 10-bit string is split into two 5-bit halves mapped to the x- and y-axes as binary integers (0–31), yielding a 32×32 grid. Color shows predicted $P(y=1)$ (blue = 0, red = 1). Black dots mark training points. The top-left panel is ground truth. Panel titles report training accuracy (\%).}
    \label{fig:parity_boundaries}
\end{figure*}

% \clearpage
% \bibliographystyle{abbrvnat}
% \bibliography{reference}